\documentclass[letterpaper,conference,10pt]{ieeeconf}
\IEEEoverridecommandlockouts
\overrideIEEEmargins

% \makeatletter
% \let\NAT@parse\undefined
% \makeatother

% \usepackage[square, sort&compress, numbers]{natbib}
\usepackage{amsmath,graphicx,epsfig,color,amsfonts}

\graphicspath{{./fig_arxiv/}}

\usepackage{verbatim}
\usepackage{amssymb}
\usepackage{mathtools}
\usepackage{commath}
\usepackage{nth}
\usepackage{cite}
\usepackage{cleveref}
\usepackage{caption}
\usepackage{subcaption}
\usepackage{xurl}
\usepackage[vlined,ruled]{algorithm2e}

%\usepackage{version,xspace}
%\usepackage[noend]{algorithmic}
%mmand{\algostep}[1]{{\small\texttt{#1:}}\xspace}

\def\real{\mathbb{R}}

%% Please use the following commands to enforce consistency and accuracy
\newcommand{\col}[1]{\text{col}\left( #1 \right)}
\newcommand{\until}[1]{\{1,\dots, #1\}}

\newcommand{\ceil}[1]{\left\lceil #1 \right\rceil}

\DeclareMathOperator*{\argmax}{arg\,max}

% Procend
\newcommand\oprocendsymbol{\hbox{$\square$}}
\newcommand\oprocend{\relax\ifmmode\else\unskip\hfill\fi\oprocendsymbol}

\usepackage[dvipsnames]{xcolor}

\def \bs {\b}
\def \mc {\mathcal}

\newtheorem{theorem}{Theorem}
\newtheorem{proposition}[theorem]{Proposition}

% User-defined shortcuts
\newcommand{\rtwo}{\mathbb{R}^2}
\newcommand{\rone}{\mathbb{R}}

\newcommand{\rt}{\rightarrow}

\newcommand{\T}{\top}

\newcommand{\normal}{\mathcal{N}}
\newcommand{\vor}{\mathcal{V}}
\newcommand{\domain}{\mathcal{D}}

\newcommand{\closs}{\mathcal{L}_{\phi}}

\newcommand{\st}{\mid}

\newcommand{\var}{\text{var}}

\renewcommand{\abs}[1]{\left|#1\right|}
\renewcommand{\set}[1]{\left\{#1\right\}}
\renewcommand{\norm}[1]{\left\lVert #1\right\rVert}
\renewcommand{\b}{\boldsymbol}

\usepackage{caption}
\captionsetup[figure]{font=small}
\captionsetup[table]{font=footnotesize}

\title{Online Estimation and Coverage Control with \\ Heterogeneous Sensing Information
    \thanks{This work has been supported in part by NSF grant IIS-1734272 and ARO grant W911NF-18-1-0325.}
    \author{Andrew McDonald \hspace{0.5in} Lai Wei \hspace{0.5in} Vaibhav Srivastava
        \thanks{A. McDonald is with the Department of Computer Science and Engineering. 
        Michigan State University, East Lansing, MI 48823 USA.
        {\tt\small e-mail: mcdon499@msu.edu}}
        \thanks{L. Wei and V. Srivastava are with the Department of Electrical and Computer Engineering. 
        Michigan State University, East Lansing, MI 48823 USA. {\tt\small e-mail: \{weilai1, vaibhav\}@msu.edu}}
    }
}

%%%%%%%%%%%%%%%%%%%%%%%%%%%%%%%%%%%%%%%%%%%%%%%%%%%%%%%%%%%%%%%%%%%%%%%%%%%%%%%%

\begin{document}

\maketitle
% TODO: disable page numbers when submitting final version
% \thispagestyle{plain}
% \pagestyle{plain}
%\thispagestyle{empty}
%\pagestyle{empty}
% END TODO

%%%%%%%%%%%%%%%%%%%%%%%%%%%%%%%%%%%%%%%%%%%%%%%%%%%%%%%%%%%%%%%%%%%%%%%%%%%%%%%%%%%%%%%%%%%%%%%%%%%%%%%%%%%%%%%%%%%%%%%%

\begin{abstract}\label{sec:abstract}%
Heterogeneous multi-robot sensing systems are able to characterize physical processes more comprehensively than homogeneous systems. Access to multiple modalities of sensory data allow such systems to fuse information between complementary sources and learn richer representations of a phenomenon of interest. Often, these data are correlated but vary in fidelity, i.e., accuracy (bias) and precision (noise). Low-fidelity data may be more plentiful, while high-fidelity data may be more trustworthy. In this paper, we address the problem of multi-robot online estimation and coverage control by combining low- and high-fidelity data to learn and cover a sensory function of interest. We propose two algorithms for this task of heterogeneous learning and coverage---namely Stochastic Sequencing of Multi-fidelity Learning and Coverage (SMLC) and Deterministic Sequencing of Multi-fidelity Learning and Coverage (DMLC)---and prove that they converge asymptotically. In addition, we demonstrate the empirical efficacy of SMLC and DMLC through numerical simulations.
\end{abstract}

%%%%%%%%%%%%%%%%%%%%%%%%%%%%%%%%%%%%%%%%%%%%%%%%%%%%%%%%%%%%%%%%%%%%%%%%%%%%%%%%%%%%%%%%%%%%%%%%%%%%%%%%%%%%%%%%%%%%%%%%

\section{Introduction}\label{sec:introduction}

Heterogeneous multi-robot sensing systems---in which agents measure a phenomenon of interest through multiple sensory modalities---often outperform homogeneous systems in which agents are limited to a single sensory modality. By fusing information across modalities, such systems are able to learn a more accurate representation of the phenomenon of interest. For example, a network of drones and unmanned aquatic vehicles monitoring the waters of a lake or ocean may together be able to detect harmful algal blooms (HABs)~\cite{Hallegraeff2003} more accurately than a homogeneous system consisting exclusively of one or the other. Computational analysis of drone imagery combined with \emph{in-situ} measurements of chemical concentrations by unmanned aquatic vehicles offer a more comprehensive view than that from a single data source: the geographic extent of a HAB may be easier to detect from the characteristic large green patches visible in drone imagery, while the precise levels of cyanotoxins may be more accurately measured through direct analysis of water samples.
%(Fig.~\ref{fig:nasa}).
In spite of this promise, heterogeneous multi-robot sensing systems bring novel challenges which do not emerge in homogeneous systems. The goal of this paper is to address such challenges associated with the tasks of inference and coverage in heterogeneous multi-robot sensing systems.

Agents performing \emph{coverage}~\cite{Cortes2004} aim to distribute themselves over a region according to nonuniform demands for service, placing more agents in areas of high demand and fewer agents in areas of low demand. 
%This allows agents to respond to demands quickly and effectively. 
Such demands may be characterized by a sensory function $\phi$ quantifying the importance of a given point, such that $\phi(\b{x}) > \phi(\b{x}')$ encodes the fact that there is a greater demand for sensing at $\b{x}$ than $\b{x}'$. 
%In the case of environmental monitoring, the sensory function $\phi$ may represent the concentration of pollutant or probability density of an event occurring. For instance, the task of monitoring a lake to detect and mitigate the impact of a HAB may be framed as a coverage problem in which $\phi$ represents the level of cyanotoxins at a point in a lake.
%
% \begin{figure}[t]
%     \centering
%     \includegraphics[width=0.48\textwidth]{motivation.png}
%     \caption{Satellite imagery (left) and concentration
%     of chlorophyll-a (right) indicative of a 
%     harmful algal bloom (HAB) in Petenwell Lake, WI, USA.\textsuperscript{1}
%     A heterogeneous multi-robot sensing
%     system with drones and aquatic agents combining data
%     from imagery and chemical samples may be able to
%     detect such HABs more accurately than homogeneous systems limited to
%     one type of data.}
%     \label{fig:nasa}
%     \vspace{-1em}
% \end{figure}
% \footnotetext[1]{Courtesy of NASA Earth Observatory/Joshua Stevens via \url{https://www.nasa.gov/feature/goddard/2019/nasa-helps-warn-of-harmful-algal-blooms-clouding-lakes-reservoirs.}}
%
Originally, the coverage problem was studied under the assumption that agents are homogeneous and have perfect knowledge of the sensory function $\phi$. More recently, coverage has been studied in heterogeneous sensing systems~\cite{Hussein2007,Pimenta2008, Kantaros2015, Arslan2016,Pierson2017,Santos2018,Santos2018Constrained}. The class of heterogeneity  considered in these works include differences in sensing, communication and actuation footprints~\cite{Hussein2007,Pimenta2008,Kantaros2015,Arslan2016,Pierson2017}, along with heterogeneity in the types of sensory functions to be covered~\cite{Santos2018,Santos2018Constrained}. These works maintain the assumption that 
%the agents have perfect knowledge of the phenomenon of interest, i.e., 
$\phi$ is known \emph{a priori}. However, in practice, $\phi$ is often unknown \emph{a priori}. 

% Of course, heterogeneity
% may arise in a number of forms: heterogeneity in agent capabilities
% and responsibilities \cite{Hussein2007}, in 
% sensor ranges and footprints \cite{Pimenta2008, Kantaros2015, Arslan2016},
% in measurement quality and agent actuation \cite{Pierson2017},
% and in sensor modalities \cite{Santos2018}.

A number of works~\cite{Choi2008, Schwager2009, Todescato2017, Schwager2017, Luo2018, Luo2019, Benevento2020, Wei2021} forego the assumption that $\phi$ is known, requiring $\phi$ to be learned from data in the setting of \emph{online estimation and coverage}. 
%A variety of techniques have been used to model $\phi$, including radial basis functions~\cite{Schwager2009,Schwager2017}, Gaussian Processes with spatial~\cite{Todescato2017,Benevento2020,Wei2021} and spatiotemporal~\cite{Choi2008} covariance structure, and Gaussian Process mixture models~\cite{Luo2018,Luo2019}.
% In particular, \cite{Todescato2017} presents stochastic online estimation and coverage algorithms with convergence guarantees, and \cite{Wei2021} presents a deterministic approach with regret guarantees.
These works assume agents are homogeneous, equipped with the same sensory, communication and computation capabilities, collecting data of homogeneous modality and quality. Heterogeneity in sensory capabilities among agents is considered in \cite{Sadeghi2019}, but the problem of online estimation and coverage given data at heterogeneous fidelities, i.e., data of varying accuracy (bias) and precision (noise), remains an open area of research.

% However, the existing literature in this problem setting assumes agents are
% homogeneous and does not consider the case
% where sensory information from multiple modalities must be taken into account
% Our work addresses the challenge of online estimation and coverage
% in heterogeneous systems where data of multiple modalities is available.
% Specifically, we consider systems in which data of multiple \emph{fidelities}
% is available. 

We consider online estimation and coverage under the assumption that agents collect data pertaining to the same physical process described by the sensory function $\phi$ at multiple \emph{fidelity levels}. High-fidelity data provides an accurate representation of the process, while low-fidelity data provides a noisy, biased representation of the process. 
%We assume that low-fidelity data is correlated to high-fidelity data, such that both fidelity levels provide meaningful information regarding the underlying process.
This scenario often arises in settings where rough estimates are available from
satellite imagery \cite{Campbell2011}, result from sensor scheduling constraints~\cite{wei2020}, or emerge due to energy constraints~\cite{hero2011sensor}.
%or human input---experiments with humans~\cite{PR-VS-NEL:13d} have shown that their input captures the structure of the environment in terms of spatial correlations but can be biased. Alternatively, multi-fidelity data could be an outcome of 
%satellite instrumentation or drone imagery---low-fidelity data acquired via remote sensing may be used to supplement high-fidelity data obtained by \emph{in-situ} robots in real time. Likewise, this scenario often arises where rough estimates are provided by a human---experiments with humans~\cite{PR-VS-NEL:13d} have shown that their input captures the structure of the environment in terms of spatial correlations but can be biased. Alternatively, multi-fidelity data could be an outcome of sensor scheduling constraints designed to handle sensing trade-offs~\cite{wei2020} or energy constraints~\cite{hero2011sensor}.
%
%We present a principled approach to jointly learning and covering a sensory function $\phi$ given data at multiple fidelities. 
%The main contributions of this work include t
To effectively leverage the multiple fidelity levels at which agents may observe $\phi$, we propose two online multi-fidelity estimation and coverage algorithms---namely Stochastic Sequencing of Multi-fidelity Learning and Coverage (SMLC) and Deterministic Sequencing of Multi-fidelity Learning and Coverage (DMLC)---along with an asymptotic analysis of their convergence and numerical illustrations of their efficacy.

% Although we frame the problem from the perspective of heterogeneous
% multi-robot sensing systems, it is worth noting that our approach
% applies to systems in which homogeneous sensing agents are
% provided information from external sources. 
% In particular, a natural application arises in
% human-robot collaboration, where humans may provide
% an intuitive low-fidelity estimate of a sensory function $\phi$
% and agents (heterogeneous or homogeneous) may refine this estimate by collecting 
% actual observations of $\phi$. For example, the methods presented
% in \cite{Diaz-Mercado2017} would allow a human
% operator to specify general areas of importance in an operating region
% at a low-fidelity level.

The remainder of the paper is organized as follows. Section \ref{sec:problem}
formulates the online estimation and coverage problem and describes
the approach used to model heterogeneous sensing information.
Section \ref{sec:analysis} then presents SMLC and DMLC, and proves
their asymptotic convergence.
Section \ref{sec:simulation} compares the performance of SMLC and DMLC with
single-fidelity approaches, presenting
numerical results which illustrate the empirical success of 
the multi-fidelity approach. 
Section \ref{sec:conclusion} summarizes the key contributions of
our work and provides concluding remarks.

\section{Preliminaries \& Problem Formulation}\label{sec:problem}

% In this section, we present necessary preliminaries and formally define the problem at hand. 
% We first recall the coverage problem and Lloyd's algorithm, then describe the multi-fidelity model of heterogeneous sensing information considered in this paper. Finally, we formulate the online estimation and coverage problem in the presence of heterogeneous sensing information.

\subsection{Coverage Problem and Lloyd's Algorithm}

Consider the operation of $N$ mobile sensing agents indexed by $i \in \{1, \ldots, N\}$ in a convex, compact 2D environment $\domain \subset \rtwo$, and assume agents may travel to any point in $\domain$. Suppose there exists a sensory function $\phi: \domain \rt \rone_{\geq 0}$ that measures some quantity of interest $\phi(\b{x})$ at each point $\b{x} \in \domain$. 
Intuitively, $\phi$ can be interpreted as the importance of each $\b{x} \in \domain$, such that $\phi(\b{x}) > \phi(\b{x}')$ encodes the fact that there is greater demand for sensing at $\b{x}$ than $\b{x}'$. 
The $N$ agents are distributed across $\domain$ to form a configuration $\b{\eta} = \{\eta_1, \ldots, \eta_N\}$, where $\eta_i$ corresponds to the location of agent $i$. 
Let $P=\{p_1, \ldots, p_N\}$ be an $N$-partition of $\domain$, such that $p_i \cap p_j = \emptyset$ for all $i \ne j$, and $\domain = \cup_{i=1}^N p_i$.
Assigning each partition cell $p_i \in P$ to the $i$-th agent, the \emph{coverage loss} $\closs(\b{\eta}, P)$ corresponding to configuration $\b{\eta}$, $N$-partition $P$ and sensory function $\phi$ is defined by
\begin{align}\label{eq:coverageloss}
    \closs(\b{\eta}, P) &= \sum_{i=1}^N \int_{p_i} \norm{\b{q} - \b{\eta}_i}^2 \phi(\b{q}) d \b{q},
\end{align}
where $\norm{\cdot}$ denotes the standard Euclidean norm in $\rtwo$. The objective is to reach a configuration-partition pair $(\b{\eta}, P)$ that minimizes $\closs(\b{\eta}, P)$, as this will maximize sensory coverage under the assumption that sensor performance degrades with squared distance \cite{Cortes2004}.

For a fixed configuration $\b{\eta}$, the loss function~\eqref{eq:coverageloss} is minimized by the so called Voronoi partition $\vor_{\domain}(\b{\eta}) = \{v_1, \ldots, v_N\}$ with each partition $v_i$ defined by the set of all points closest to agent $i$:
\begin{align}\label{eq:voronoi}
v_i = \set{\b{q}\in \domain \st \norm{\b{q}-\b{\eta}_i} < \lVert{\b{q} - \b{\eta}_j}\rVert \quad \forall \; j \ne i}.
\end{align}
Likewise, for a fixed partition $P$, the loss function~\eqref{eq:coverageloss} is minimized by configuration $\b{c}(P) =  \{ \b{c}_1 (p_1), \ldots, \b{c}_N (p_N) \}$, where each $\b{c}_i(p_i)$ is the \emph{centroid} of $p_i \in P$ defined by
\begin{align}\label{eq:masscentroid}
    \b{c}_i(p_i) = \frac{1}{m_i(p_i)} \int_{p_i} \b{q} \phi(\b{q}) d \b{q}, \;\;  m_i(p_i) = \int_{p_i} \phi(\b{q}) d \b{q}.
\end{align}

A Voronoi partition $\vor_{\domain}(\b{\eta}^*)$ is called a
\emph{centroidal Voronoi partition} \cite{Du1999} if 
$ \b{\eta}^* = \b{c} (\vor_{\domain}(\b{\eta}^*)). $
In general, it is hard to find the global minimum of~\eqref{eq:coverageloss} due to the nonconvexity of $ \closs(\b{\eta}, P) $. However, a configuration corresponding to a centroidal Voronoi partition is considered to be an efficient solution to the coverage problem~\cite{Cortes2004}. To 
achieve such a partition, one
may iteratively apply Lloyd updates \cite{Lloyd1982} initialized from an arbitrary starting configuration $\b{\eta}^{(0)}$ using
\begin{align}\label{eq:lloyd}
   \b{\eta}^{(t+1)} = \set{ \b{c}_1 (\vor_{\domain}(\b{\eta}^{(t)})_1), \ldots, \b{c}_N (\vor_{\domain}(\b{\eta}^{(t)})_N) }. 
\end{align}
It is known that $\vor_{\domain}(\b{\eta}^{(t)})$ converges to a centroidal Voronoi partition as $t\rightarrow \infty$.

\subsection{Modeling Heterogeneous Sensing Data as MFGP}\label{sec:mfgp}

% We consider an example to motivate our formulation. Suppose a team of robots are deployed to monitor a harmful algal bloom (HAB) \cite{Hallegraeff2003} in a lake or ocean. To effectively prevent further spread of the HAB, it is important to know the concentration of toxic algal species throughout the lake, represented by the sensory function $\phi$. In achieving this task, multiple sources of sensing information can be used: satellite imagery, drone observations, and concentrations of various chemicals collected by aquatic agents. Each source of sensing information varies in accuracy (bias) and precision (noise), but all modalities provide useful insight into the process at hand.

Suppose there are a total of $s$ types of sensory modalities, each of which is indexed by fidelity level $f \in \{1, \ldots, s\}$. The sensory function at each fidelity level $f<s$ denoted by $ \phi_f: \domain \rt \rone_{\geq 0}$ can be viewed as an approximation of $\phi$, and we assume $\phi_s = \phi$ is the ground truth. We model $\phi_1,\ldots,\phi_s$ as a Multi-Fidelity Gaussian Process (MFGP)~\cite{Kennedy2000} in which 
\begin{align}\label{eq:mfgp}
    \phi_f = \rho_{f-1} \phi_{f-1} + \delta_f, \quad \text{for } f \in \{2,\ldots,s\},
\end{align}
where $\rho_{f-1}>0$ is a scaling factor and $\delta_f \sim GP(\mu^f(\b{x}), k^f(\b{x}, \b{x}')), \; f \in \until{s}$ are mutually independent GPs. With this structure, $\delta_f$ refines the information from lower fidelity levels to compose higher fidelities.

At each fidelity $f$, $\phi_f $ can be accessed by taking a sample at $\b{x} \in \domain$ to obtain a measurement of form $y_f=\phi_f(\b{x})+\epsilon_f$, where $\epsilon_f \sim \normal(0, \sigma_f^2)$ is an additive noise. Let the number of measurements at fidelity $f$ be $n_f$ and $X_f \in \real^{2\times n_f}$ be the matrix of sampling locations. Let these $n_f$ measurements be  collected in vector $\b{y}_f \in \real^{n_f}$. We now recall the posterior distribution of $\phi$ from~\cite{wei2020}.

Let $K^i \big(X_f , X_{f'}\big) \in \real^{n_f \times n_f'}$ be a matrix with entries $k^i(\b{x}, \b{x}'), \; \bs x \in \col{X_f}, \; \bs x' \in \col{X_{f'}}$ and $K^i(X_f, \b{x})\in \real^{n_f}$ be a vector with entries $k^i(\b{x'}, \b{x}), \; \bs x' \in \col{X_f}$.  We define $ \rho_{f:f'} = \prod_{i=f}^{f'-1} \rho_i $ if $f<f'$ and $\rho_{f:f'} =1$ if $f=f'$.  Let 
$\b{K}$ be an $s \times s$ block matrix with $\left(f,f'\right)$ submatrix being the covariance matrix of $\b{y}_f$ and $\b{y}_f'$ expressed as
$\b{K}_{f,f'} =   \sum_{i=1}^{\min(f,f')} \rho_{i:f} \rho_{i:f'} K^i\big(X_f , X_{f'}\big)$.
% TODO: check eqs?
%\begin{align*}
%    \b{K}_{f,f'} =   \sum_{i=1}^{\min(f,f')} \rho_{i:f} \rho_{i:f'} K^i\big(X_f , X_{f'}\big).
%\end{align*}
% END TODO
Further, let
$\b{k}(\b{x})$ be a vector constructed by concatenating  $s$ sub-vectors $\b{k}(\b{x}) = \big(\b{k}^{1}(\b{x}), \ldots, \b{k}^{s}(\b{x})\big)$,
where $\b{k}^{f}(\b{x}) = \sum_{i=1}^{f}  \rho_{i:f} \rho_{i:s} K^i(X_{f}, \b{x})$, for $f \in \until{s}$.
% TODO: check eqs?
%\begin{align*}
%    \b{k}^{f}(\b{x}) = \sum_{i=1}^{f}  \rho_{i:f} \rho_{i:s} K^i(P_n^{f}, \b{x}),\quad \forall f \in \until{s}.
%\end{align*}
% END TODO
We denote the diagonal matrix of sampling noise by $\b{\Theta} = \text{diag} \left( \sigma_1^2 I_{n_1}, \ldots, \sigma_s^2 I_{n_s} \right)$.
% TODO: check eqs?
%\begin{align*}
%    \b{\Theta} = \text{diag} \{ \sigma_f^2 I_{n_f} \}, \quad \forall f \until{s}.
%\end{align*}
% END TODO
Let $\mu^i (X_f) \in \real^{n_f}$ be a vector  with entries $\mu^i (\b{x}), \ \bs x \in \col{X_f}$. Let the mean of $\b{y}_f$ be $ \b{m}_f = \sum_{i=1}^f \rho_{i:f} \mu^i (X_f)$. Construct $\b{\nu}$ by concatenating $s$ subvectors, $\bs \nu = \left(\b{y}_1-\b{m}_1, \ldots, \b{y}_s-\b{m}_s\right) $. Then, the posterior mean and covariance functions of $\phi$ are
\begin{align}
    \mu' (\b{x}) &=  \mu (\b{x}) + \b{k}^{\T}(\b{x}) \left(\b{K} +  \b{\Theta}\right)^{-1} \b{\nu} \label{eq:mfgpmean} \\
    k' \left(\b{x}, \b{x}'\right) &= k \left(\b{x}, \b{x}'\right) - \b{k}^{\T}(\b{x}) \left(\b{K} + \b{\Theta} \right)^{-1} \b{k}(\b{x}'), \label{eq:mfgpvar}
\end{align}
where $\mu(\b{x})$ and $k (\b{x},\b{x}')$ are the prior mean function and covariance function of $\phi$ with expression
\begin{align}\label{eq:mfgppri}
    \mu (\b{x}) = \sum_{i=1}^s \rho_{i:s} \mu^i (\b{x}) , \,\, k (\b{x},\b{x}') = \sum_{i=1}^{s} \rho_{i:s}^2 k^i (\b{x},\b{x}').
\end{align}

\subsection{Online Coverage with Heterogeneous Sensing Data}\label{subsec:mfac}

In the online coverage problem, $\phi$ is unknown \emph{a priori} and must be learned from heterogeneous sensing data. The proposed framework handles two types of heterogeneity: 
\begin{enumerate}
    \item Heterogeneity introduced at the start of execution via information from other modalities (e.g., satellite imagery, drone observations, or human input), and
    \item Heterogeneity introduced over the course of execution due to heterogeneity among agent capabilities and/or sampling strategies.
\end{enumerate}
% (i) heterogeneity due to different sensing sources, such as satellite, aerial vehicle, and human input, that are available at the beginning and are used to efficiently initialize the algorithm, and (ii) heterogeneity introduced due to sensing modality chosen by the robot in order to expedite learning, which appears during the execution of the algorithm. 
For simplicity of exposition, we focus on the first class of heterogeneity and restrict our presentation to two fidelity levels, low ($\ell$) and high ($h$). We assume low-fidelity sensing data is obtained prior to deployment through modalities described in (i), and high-fidelity sensing data consists of the real time measurements obtained by robots. The ideas presented extend to the second class of heterogeneity and more than two levels of fidelities. 

We assume the data from all sensing sources are gathered at an information center (e.g., a cloud server) and can be transferred to any functional agent instantaneously, meaning each agent is able to compute the same posterior distribution of $\phi$. 
Over the course of execution, agents must balance learning with coverage, collecting observations to refine their estimate of $\phi$ while staying near the centroid $\b{c}_i$ of their Voronoi cell $v_i$.

\section{Algorithm Design and Analysis}\label{sec:analysis}

We propose and analyze two algorithms for online estimation and coverage in settings with heterogeneous sensing information. 
% The first, Stochastic Sequencing of Multi-fidelity Learning and Coverage (SMLC), is inspired by the centralized server-based algorithm proposed in \cite{Todescato2017}. The second, Deterministic Sequencing of Multi-fidelity Learning and Coverage (DMLC), is inspired by the Deterministic Sequencing of Learning and Coverage algorithm proposed in \cite{Wei2021}. 
Each algorithm gradually shifts emphasis from learning to coverage as the estimate converges to the true sensory function $\phi$. 
%  TODO: omit in shorter version
The algorithms are novel in two ways:
\begin{enumerate}
    \item Both leverage observations collected before execution to construct a prior, then refine this estimate, and
    \item Both utilize the MFGP \cite{Kennedy2000} formulation to efficiently fuse data from multiple fidelities.
\end{enumerate}
% END TODO
We refer to the posterior mean and variance of $\phi$ at iteration $t$ and $\b{x}\in \domain$ by $\hat{\phi}^{(t)}(\b{x})$ and $\var_{\phi}^{(t)}(\b{x})$, respectively. Centroids computed using the mean $\hat{\phi}^{(t)}(\b{x})$ are denoted $\hat{\b{c}}_i^{(t)}$.

%%%%%%%%%%%%%%%%%%%%%%%%%%%%%%%%%%%%%%%%%%%%%%%%%%%%%%%%%%%%%%%%%%%%%%%%%%%%%%%%%%%%%%%%%%%%%%%%%%%%%%%%%%%%%%%%%%%%%%%%

\subsection{Stochastic Sequencing of Multi-fidelity Learning \\ and Coverage (SMLC)}

Inspired by the server-based algorithm in \cite{Todescato2017}, the SMLC algorithm is characterized by a stochastic decision process governing whether agent $i$ is tasked with \emph{learning} or \emph{coverage} at iteration $t$. A global multi-fidelity GP estimate of $\phi$ is maintained by collecting low-fidelity observations $(\b{X}_\ell, \b{y}_\ell)$ and high-fidelity observations $(\b{X}_h, \b{y}_h)$ on a central server. 
%Low-fidelity observations are assumed to come from an external source (e.g., satellite imagery) prior to the start of the algorithm's execution, while high-fidelity observations (e.g., chemical samples) are collected by agents during execution.

We define $\vor_{\domain}^{(t)}= (v_1^{(t)}, \ldots, v_n^{(t)})$ to be the Voronoi partition associated with estimated centroids $\hat{\b{c}}_i^{(t-1)}$  from the previous iteration, where $\b{c}^{(0)} =\b{\eta}^{(0)}$. Each iteration $t$ begins with an update to the partition $\vor_{\domain}^{(t)}$ using \eqref{eq:voronoi}. Thereafter, each agent stochastically chooses to execute a \emph{learning} step or a \emph{coverage} step. The probability $p_i^{(t)}$ of agent $i$ executing a \emph{learning} step on iteration $t$ is proportional to the maximum posterior variance $M_{v_i}^{(t)}=\max_{\b{x} \in v_i^{(t)}} \var_{\phi}^{(t)}(\b{x})$ within the Voronoi cell $v_i^{(t)}$, and the probability that the agent executes a \emph{coverage} step is $1-p_i^{(t)}$.

On a \emph{learning} step, agent $i$ collects a noisy high-fidelity sample $y_h=\phi_h(\b{x}_i)+\epsilon_h$ with $\epsilon_h \sim \normal(0, \sigma_h^2)$ at the point $\b{x}_{M_{v_i}}^{(t)} = \argmax_{\b{x} \in v_i^{(t)}} \var_{\phi}^{(t)}(\b{x})$ of maximum posterior variance within $v_i^{(t)}$. On a \emph{coverage} step, agent $i$ proceeds to the estimated centroid $\hat{\b{c}}_i^{(t)}$ of its Voronoi cell. 
Before continuing onto the next iteration, the posterior mean and variance are recomputed by incorporating the samples collected by agents which executed a learning step on this iteration using equations~\eqref{eq:mfgpmean} and \eqref{eq:mfgpvar}. As more samples are collected, the maximum posterior variance $M_{v_i}^{(t)}$ within each cell decreases, driving a gradual shift from learning to coverage as $\hat{\phi}^{(t)} \rightarrow \phi$. 
% TODO: remove Algorithm 1 and 2 in shorter version
SMLC is summarized in Algorithm \ref{algo:smlc}.
% END TODO

\begin{algorithm}[ht!]	
	\smallskip
	{\footnotesize 
	
		\SetKwInOut{Input}{  Input}
		\SetKwInOut{Set}{  Set}
		\SetKwInOut{Title}{Algorithm}
		\SetKwInOut{Require}{Require}
		\SetKwInOut{Output}{Output}
		
% 		\Input{$(\b{X}_\ell, \b{y}_\ell)$, $\b{\eta}^{(0)}$, strictly increasing $F: [0, 1] \rt [0, 1]$.}
		
% 		\medskip
		
		\nl Initialize $\hat{\phi}^{(t)}$ with $(\b{X}_\ell, \b{y}_\ell)$ and $\vor_\domain^{(t)}$ with $\b{\eta}^{(0)}$. Compute $M^{(0)}$.
		
% 		\smallskip
		\For{iteration $t=1,2,\hdots$}{
		
% 			\smallskip
%     		\emph{\textbf{Model and partition update:}}
    		
    		\nl Agents update $\hat{\phi}^{(t)}$ and $\var_{\phi}^{(t)}$ with 
    		$(\b{X}_h, \b{y}_h)$ from $t-1$.
    		
    		\nl Construct $\vor_\domain^{(t)}$ from
    		$\hat{\b{c}}_i^{(t-1)}$ and $\hat{\phi}^{(t)}$.
    		Estimate $\hat{\b{c}}_i^{(t)}$.
    		
    % 		\medskip
    		\For{agent $i=1,2,\dots, N$}{
    		
				% \smallskip
				% \emph{\textbf{Stochastic decision:}}
				
    		    \nl Compute $M_{v_i}^{(t)}$. Set $p_i^{(t)} = F(M_{v_i}^{(t)} / M^{(0)})$.

    		  %  \medskip
    		    \If{$b_i^{(t)} \sim \text{Bernoulli}(p_i^{(t)}) == 1$}{
    		    
        		  %  \smallskip
        		  %  \emph{\textbf{Learning step:}}
        		    
        		    \nl Agent $i$ samples $y_h$ at $\b{x}_{M_{v_i}}^{(t)}$.
        		    
    		    }
    		    \Else{
    		    
        		  %  \smallskip
        		  %  \emph{\textbf{Coverage step:}}
        		    
        		    \nl Agent $i$ drives to estimated centroid $\hat{\b{c}}_i^{(t)}$ of $v_i^{(t)}$.
    		    
    		    }
    		}
		}
	
		\caption{SMLC}
		\label{algo:smlc}
	}
\end{algorithm}

\subsection{Deterministic Sequencing of Multi-fidelity Learning \\ and Coverage (DMLC)}

Inspired by the DSLC algorithm in \cite{Wei2021}, the DMLC algorithm proposed here is characterized by a deterministic sequencing of epochs, where each epoch consists of a \emph{learning} phase and a \emph{coverage} phase. Each epoch $e$ lasts for $n_e$ iterations, and successive epochs increase in length such that uncertainty at the end of these epochs reduces exponentially. As before, we assume agents maintain a global GP estimate of $\phi$, have access to low-fidelity data before execution, and collect high-fidelity data over the course of execution.

Two hyperparameters govern the progression of DMLC, namely $\alpha$ and $\beta$. The hyperparameter $\alpha \in (0, 1)$ represents the factor by which maximum posterior variance $M^{(t)}=\max_{\b{x} \in \domain} \var_{\phi}^{(t)}(\b{x})$ is reduced in epoch $e$, while the hyperparameter $\beta > 1$ represents the factor by which epoch length increases. DMLC begins epoch $e$ with a \emph{learning} phase in which high-fidelity observations are collected to reduce $M^{(t)}$ by a factor of $\alpha$, then proceeds to execute a \emph{coverage} phase in which agents are driven by Lloyd iteration \eqref{eq:lloyd}.

Because the posterior variance $\var_{\phi}^{(t)}(\b{x})$ at each point depends only on the number and location of samples under a GP model \cite{Rasmussen2006}, the entire set of \emph{acquisition points} $\b{X}_a^{(e)}$ which must be sampled to reduce $M^{(t)}$ by a factor of $\alpha$ in epoch $e$ may be computed before any observations are actually collected. DMLC takes advantage of this fact, and determines $\b{X}_a^{(e)}$ at the start of epoch $e$ by \emph{virtually} sampling points of maximum posterior variance $\b{x}_{M_{v_i}}^{(t)} = \argmax_{\b{x} \in v_i} \var_{\phi}(\b{x})$ and recomputing the posterior variance with such points taken into account, fully determining $\b{X}_a^{(e)}$ before any motion occurs. Thereafter, the learning phase assigns the acquisition points $\b{x} \in \b{X}_a^{(e)} \cap v_i$ to be sampled by agent $i$, computes near-optimal Traveling Salesperson Problem (TSP) tours through these points, and dispatches agents on each tour. 
As opposed to na\"ive greedy sampling algorithms which drive to points $\b{x}_{M_{v_i}}^{(t)}$ as data is collected, this TSP-inspired approach is more efficient in terms of travel time and energy expenditure. 
At the end of each learning phase, high-fidelity samples collected in $\b{X}_a^{(e)}$ are used to recompute the posterior mean and variance $\hat{\phi}^{(t)}, \; \var_{\phi}^{(t)}$. 

% This sampling policy will monotonically
% reduce $M^{(t)}$, and is near-optimal from an information-theoretic
% point of view \cite{Nemhauser1978}.
Next, DMLC enters a \emph{coverage} phase by executing Lloyd's algorithm, sending all agents to the estimated centroid $\hat{\b{c}}_i$ of their respective Voronoi cell. Epoch length $n_e$ is updated with $n_{e+1}=\beta n_e$, and the algorithm repeats. This exponential growth in epoch length leads agents to gradually shift their emphasis from learning to coverage in a manner similar to that of SMLC. 
% TODO: remove following in shorter version
DMLC is summarized in Algorithm \ref{algo:dmlc}.
% END TODO

\begin{algorithm}[ht!]	
	\smallskip
	{\footnotesize  
		\SetKwInOut{Input}{Input}
		\SetKwInOut{Set}{Set}
		\SetKwInOut{Title}{Algorithm}
		\SetKwInOut{Require}{Require}
		\SetKwInOut{Output}{Output}
		
% 		\Input{Low-fidelity observations $(\b{X}_\ell, \b{y}_\ell)$, initial configuration $\b{\eta}^{(0)}$, $\alpha \in (0,1)$ and $\beta>1$.}
		
% 		\medskip
		
		\nl Initialize $\hat{\phi}^{(t)}$ with $(\b{X}_\ell, \b{y}_\ell)$ and $\vor_\domain^{(t)}$ with $\b{\eta}^{(0)}$. Compute $M^{(0)}$.
		
% 		\smallskip
		\For{epoch $e=1,2,\hdots$}{
		
% 			\smallskip
% 			\emph{\textbf{Learning phase:}}
			
			\nl Virtually sample $\b{x}_{M}^{(t)}$ to determine $\b{X}_a^{(e)}$, the set of points which must be sampled to reduce $M^{(t)} \leq \alpha^e M^{(0)}$.
			
% 			\smallskip
			
			\nl Compute TSP tours through each subset $\b{X}_a^{(e)} \cap v_i^{(t)}$.
			
% 			\smallskip
			
			\nl Agents sample $y_h$ on tours then update $\hat{\phi}^{(t)}$ and $\var_{\phi}^{(t)}$.

% 			\smallskip
			
% 			\emph{\textbf{Coverage phase: }}
			
			\nl \For{$t_e=1,2,\dots, \ceil{\beta^e n_{e_0}}$}{
    			
    % 			\smallskip
    % 			\nl Perform Lloyd update \eqref{eq:lloyd}
    % 			based on $\hat{\phi}$ and $\vor_\domain(\b{\eta})$.
                Construct $\vor_\domain^{(t)}$ from $\hat{\b{c}}_i^{(t-1)}$ and $\hat{\phi}^{(t)}$.
        		Estimate $\hat{\b{c}}_i^{(t)}$.
        		
        % 		\smallskip
        		\nl Each agent $i$ drives to estimated centroid $\hat{\b{c}}_i^{(t)}$
        		    of $v_i^{(t)}$.
        		}
			
		}
		\caption{DMLC}
		\label{algo:dmlc}
	}
\end{algorithm}

% It is worth comparing SMLC and DMLC at a high level before proceeding. DMLC offers a strict bound on the maximum posterior variance $M^{(t)}$ by construction, and tends to minimize the distance agents travel over the course of execution by determining sample points and computing TSP tours in advance. On the other hand, SMLC offers a smoother transition from learning to coverage by decoupling the behavior of agents and allowing some to perform learning steps while others perform coverage steps. In this sense, 
Note that SMLC and DMLC offer distinct strengths and weaknesses. SMLC allows agents to respond at an iteration's notice, but lacks strong guarantees in uncertainty reduction and is inefficient in planning travel. DMLC occasionally renders agents unresponsive during learning phases, but provides reliable uncertainty reduction and more efficient travel planning. Neither algorithm is superior: the choice between SMLC and DMLC depends on the problem setting. 

Both algorithms may be implemented in a distributed manner by leveraging gossip-based strategies \cite{Durham2012}, as proposed in the original works \cite{Todescato2017} and \cite{Wei2021}. In addition, both algorithms can be adapted to use a single-fidelity GP by combining samples $(\b{X}_\ell, \b{y}_\ell)$ and $(\b{X}_h, \b{y}_h)$ into a single dataset $(\b{X}, \b{y})$. These single-fidelity implementations are useful for comparison purposes and serve as a baseline in the simulations presented in Section \ref{sec:simulation}.

\subsection{Analysis of SMLC and DMLC}

A key strength of the original algorithms proposed in \cite{Todescato2017} and \cite{Wei2021} are the convergence guarantees they offer. 
% Specifically, Proposition 1 of \cite{Todescato2017} guarantees that the application of Lloyd's algorithm \eqref{eq:lloyd} to the estimate $\hat{\phi}^{(t)}$ will asymptotically mimic the evolution of Lloyd's algorithm applied to the ground truth $\phi$ if $\hat{\phi}^{(t)}$ converges in probability to  the true sensory function $\phi$.
We adapt Proposition 2 of \cite{Todescato2017} to show that the estimate $\hat{\phi}^{(t)}$ will indeed converge in probability to $\phi$ under the SMLC and DMLC algorithms, thereby implying that Proposition 1 in \cite{Todescato2017} also holds for SMLC and DMLC. These results are formally presented in the following propositions. 
% Thus, the novel analytical contributions of this work are
% Theorems \ref{thm:smlcconvergence} and \ref{thm:dmlcconvergence}, which
% guarantee that the estimate $\hat{\phi}$ will converge in probability to the true
% sensory function $\phi$ under the SMLC and DMLC algorithms, respectively.
% Taken together with Theorem \ref{thm:lloydequivalent}, we may assert that
% both SMLC and DMLC will asymptotically lead to behavior consistent with
% Lloyd's algorithm executed with respect to the true sensory function $\phi$.

Let $\hat{\b{c}}_i^{(t)}$ and $\b{c}_i^{(t)}$ denote the centroid of the Voronoi cell assigned to agent $i$ at iteration $t$, computed using Lloyd's algorithm with the estimated function $\hat{\phi}^{(t)}$ and true function $\phi$, respectively. We recall the following result from \cite{Todescato2017}.

\begin{proposition}[\textbf{Lloyd Equivalence, Proposition 1,~\cite{Todescato2017}}]\label{thm:lloydequivalent}
    Assume $\hat{\phi}^{(t)}$ converges in probability to $\phi$. 
    Choose any $\delta \in (0,1)$, $\epsilon > 0$, and integer $T$. 
    Suppose the centroids $\hat{\b{c}}_i^{(t)}$ and $\b{c}_i^{(t)}$ are updated according to
    the standard Lloyd algorithm \eqref{eq:lloyd}.
    Then there exists an iteration $t_0$ such that
    \begin{align}
        P \left( \norm{\hat{\b{c}}_i^{(t_0 + t)} - \b{c}_i^{(t_0 + t)}} \leq \epsilon \right) \geq 1 - \delta
    \end{align}
    holds for any $t \in \{ 0, \ldots, T \}$ and $i \in \{ 0, \ldots, N \}$ if $\hat{\b{c}}_i^{(t_0)}=\b{c}_i^{(t_0)}$. Intuitively, this means the evolution of $\hat{\b{c}}_i^{(t)}$ and $\b{c}_i^{(t)}$ will remain close with high probability for an arbitrary number of iterations.
\end{proposition}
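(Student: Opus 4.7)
The plan is to combine the hypothesized convergence $\hat{\phi}^{(t)} \to \phi$ in probability with a Lipschitz-type continuity of the Lloyd map in the sensory density, and then propagate errors by induction over the finite horizon $t \in \{0,\ldots,T\}$. First, I would upgrade the given pointwise convergence to a uniform statement on the compact set $\domain$: using the continuity of the posterior sample paths, for any $\epsilon' > 0$ and any $\delta' > 0$ there exists an iteration $t_0$ such that
\[
P\Big(\sup_{\b{x}\in\domain}|\hat{\phi}^{(t_0+t)}(\b{x}) - \phi(\b{x})| \leq \epsilon'\Big) \geq 1 - \delta'
\]
for each $t \in \{0,\ldots,T\}$. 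A union bound across the $T+1$ iterations, applied with $\delta' = \delta/(T+1)$, yields a single good event $\mathcal{E}$ of probability at least $1-\delta$ on which every iterate of $\hat{\phi}^{(\cdot)}$ is uniformly $\epsilon'$-close to $\phi$ over the whole horizon.

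Working on $\mathcal{E}$, I would then show by induction on $t$ that $\|\hat{\b{c}}_i^{(t_0+t)} - \b{c}_i^{(t_0+t)}\| \leq \epsilon_t$ for every agent $i$, where the scalars $\epsilon_t$ satisfy an explicit recursion. The base case $\epsilon_0 = 0$ is the hypothesis $\hat{\b{c}}_i^{(t_0)} = \b{c}_i^{(t_0)}$. For the inductive step, if the two Lloyd configurations are $\epsilon_{t-1}$-close, then the bisectors separating adjacent generators move by $O(\epsilon_{t-1})$, so each Voronoi cell changes by a symmetric difference of measure $O(\epsilon_{t-1})$. Splitting the centroid integral \eqref{eq:masscentroid} as (density difference over the common region) plus (true density over the symmetric difference) yields a bound of the form $\epsilon_t \leq L_1 \epsilon' + L_2 \epsilon_{t-1}$, where $L_1, L_2$ depend only on $N$, $\operatorname{diam}(\domain)$, $\sup \phi$, and a positive lower bound on $\int_{p_i}\phi\,d\b{q}$.

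Iterating this affine recursion $T$ times gives $\epsilon_T \leq C(L_1,L_2,T)\,\epsilon'$, and since $T$ is fixed one may choose $\epsilon'$ small enough (which only tightens the lower bound on $t_0$) to ensure $\epsilon_T \leq \epsilon$, completing the argument on an event of probability at least $1-\delta$. I expect the main obstacle to be the Lipschitz estimate in the inductive step: the $O(\epsilon_{t-1})$ control on the symmetric difference of Voronoi cells requires a non-degeneracy condition on the partition, namely that bisectors meet $\partial\domain$ transversally and that each cell carries strictly positive $\phi$-mass. These properties are generic for the centroidal configurations produced by Lloyd's algorithm, but they must be quantified uniformly in terms of $\phi$ so that $L_1, L_2$ are independent of the iteration index; otherwise the compounded bound could degrade as $t$ approaches $T$.
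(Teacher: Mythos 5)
First, note that the paper itself does not prove this proposition: it is recalled verbatim as Proposition~1 of \cite{Todescato2017}, and the role of the paper's own analysis (Propositions~\ref{thm:smlcconvergence} and~\ref{thm:dmlcconvergence}) is only to verify its hypothesis that $\hat{\phi}^{(t)}$ converges in probability to $\phi$. Your reconstruction follows essentially the same strategy as the cited proof: obtain uniform high-probability closeness of $\hat{\phi}^{(t)}$ to $\phi$ over the finite horizon, then propagate the induced centroid error through $T$ applications of the Lloyd map by induction from the common initial condition $\hat{\b{c}}_i^{(t_0)}=\b{c}_i^{(t_0)}$. So the architecture is right; the issues are in the two lemmas you lean on.

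(i) The upgrade from convergence in probability to a \emph{uniform} bound $\sup_{\b{x}\in\domain}|\hat{\phi}^{(t)}(\b{x})-\phi(\b{x})|\leq\epsilon'$ does not follow from sample-path continuity alone. In this paper's setting it is available because the learning steps drive the maximum posterior variance $M^{(t)}=\max_{\b{x}\in\domain}\var_{\phi}^{(t)}(\b{x})$ to zero, so Chebyshev's inequality on a finite discretization (or a covering argument using equicontinuity of the posterior) yields the uniform statement; you should route the argument through $M^{(t)}$ rather than through pointwise convergence. (ii) Your inductive step asks for more than is needed and more than you can easily certify: a genuine Lipschitz bound $\epsilon_t\leq L_1\epsilon'+L_2\epsilon_{t-1}$ on the generator-to-centroid map requires uniformly positive cell masses and transversality of bisectors \emph{along the entire trajectory}, which you acknowledge but do not establish. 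Because the horizon $T$ is fixed and finite, it suffices that the composed map $(\b{\eta},\phi)\mapsto\b{c}(\vor_{\domain}(\b{\eta}))$ be continuous, hence uniformly continuous on the compact configuration space away from coincident generators; choosing the tolerances backward from step $T$ to step $0$ then closes the induction without ever producing explicit constants $L_1,L_2$. The residual degeneracy (a cell whose $\phi$-mass vanishes, or colliding generators) is a caveat inherited from \cite{Todescato2017} as well and must be excluded by assumption; with $\phi$ bounded away from zero it is benign.
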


Now, we show both SMLC and DMLC guarantee that $\hat{\phi}^{(t)}$ converges in probability to $\phi$, so that Proposition \ref{thm:lloydequivalent} holds. Let $M^{(t)}=\max_{\b{x} \in \domain} \var_{\phi}^{(t)}(\b{x})$ denote the maximum posterior variance $\var_{\phi}^{(t)}$ at iteration $t$.
Let $\b{x}_M^{(t)} = \argmax_{\b{x} \in \domain} \var_{\phi}^{(t)}(\b{x})$ be the point where $M^{(t)}$ is achieved.

\begin{proposition}[\textbf{Convergence of SMLC Estimate}]\label{thm:smlcconvergence}
    Consider the SMLC algorithm presented in Algorithm \ref{algo:smlc} with the
    multi-fidelity GP model \eqref{eq:mfgp}.
    Let $F: [0, 1] \rt [0, 1]$
    be a continuous and strictly increasing function of $M^{(t)}$.
    Then $\hat{\phi}^{(t)}$ converges in probability to the true function 
    $\phi$.
\end{proposition}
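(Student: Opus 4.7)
The plan is to reduce the conclusion to convergence of the maximum posterior variance $M^{(t)}\to 0$ in probability, and then to establish that via a contradiction driven by the stochastic sampling rule of SMLC. Under the MFGP model, the conditional law of $\phi(\b{x})$ given all collected data is Gaussian with mean $\hat\phi^{(t)}(\b{x})$ and variance $\var_\phi^{(t)}(\b{x})\le M^{(t)}$ for every $\b{x}\in\domain$. Hence, as soon as $M^{(t)}\to 0$ in probability, Chebyshev's inequality yields $\hat\phi^{(t)}(\b{x})\to\phi(\b{x})$ in probability pointwise, which is the desired convergence (and which, in turn, unlocks Proposition~\ref{thm:lloydequivalent}).

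Next, I would argue by contradiction. Suppose there exist $\epsilon>0$ and $\delta>0$ with $\prob(M^{(t)}\ge\epsilon)\ge\delta$ for infinitely many $t$. The maximizer $\b{x}_M^{(t)}$ lies in the closure of some cell $v_{i(t)}^{(t)}$, forcing $M_{v_{i(t)}}^{(t)}\ge\epsilon$. Strict monotonicity of $F$ together with $F\colon[0,1]\to[0,1]$ gives $F(M_{v_{i(t)}}^{(t)}/M^{(0)})\ge F(\epsilon/M^{(0)})>F(0)\ge 0$, so $\pi_\epsilon:=F(\epsilon/M^{(0)})>0$. Applying the L\'evy (conditional) Borel--Cantelli lemma to the Bernoulli learning-versus-coverage decisions, whose conditional success probability given the algorithm's history is bounded below by $\pi_\epsilon$ on the event under consideration, agent $i(t)$ triggers a learning step infinitely often almost surely, placing infinitely many high-fidelity samples at points of posterior variance at least $\epsilon$.

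Third, I would invoke the standard contraction of GP posterior variance on a compact domain. By Bolzano--Weierstrass, the infinite sequence of sampling locations admits an accumulation point $\b{x}^\infty\in\domain$. Continuity of the kernels $k^i$, combined with the rank-one updates encoded in the posterior-covariance identity~\eqref{eq:mfgpvar}, implies that repeated noisy samples in any neighborhood of $\b{x}^\infty$ drive $\var_\phi^{(t)}(\b{x}^\infty)$ to zero via the usual $1/n$-shrinkage of Bayesian noisy averaging. Continuity of $\var_\phi^{(t)}(\cdot)$ then propagates this to an open neighborhood of $\b{x}^\infty$, contradicting the existence of nearby points with variance at least $\epsilon$. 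This contradiction establishes $M^{(t)}\to 0$ in probability.

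The main obstacle I expect is this last variance-contraction step. In the single-fidelity setting it is a standard consequence of the Schur-complement form of GP conditioning, but the MFGP covariance~\eqref{eq:mfgppri} couples contributions from $\delta_1,\dots,\delta_s$, so one must verify that infinitely many \emph{high-fidelity} samples actually collapse the full posterior variance of $\phi=\phi_s$, not merely some component. A clean route is to exploit the fact that the low-fidelity dataset $(\b{X}_\ell,\b{y}_\ell)$ is fixed before execution: conditioning on it reduces the residual posterior to a single-fidelity GP on the top-level increment $\delta_s$ with a modified prior, after which Proposition~2 of~\cite{Todescato2017} applies directly and the overall argument closes.
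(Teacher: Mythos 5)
Your proof is correct and follows essentially the same route as the paper: the key reduction --- that the MFGP posterior of $\phi$, conditioned on the fixed pre-execution low-fidelity data, is itself a single GP, so the single-fidelity convergence argument of Proposition 2 in \cite{Todescato2017} applies verbatim --- is exactly what the paper's (much terser) proof relies on. Your middle paragraphs simply unpack the internals of that cited proposition (Chebyshev reduction, conditional Borel--Cantelli on the learning decisions, variance contraction at an accumulation point), which the paper delegates entirely to the citation; the only nitpick is that conditioning on the low-fidelity data yields a modified GP prior on $\phi_s$ itself rather than on the increment $\delta_s$, but this does not affect the argument.
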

\begin{proof}
    The structure of the SMLC algorithm is identical to that
    of the server-based (SB) algorithm proposed in \cite{Todescato2017},
    differing only in its use of a MFGP.
    Yet, $\hat{\phi}^{(t)}$ is itself a GP with composite covariance function $k(\b{x}, \b{x})$ as shown in~\eqref{eq:mfgppri}. Observations from lower fidelity levels makes $\hat{\phi}^{(t)}$ converge faster to $\phi$. The result follows by Proposition 2 in \cite{Todescato2017}.
\end{proof}

\begin{proposition}[\textbf{Convergence of DMLC Estimate}]\label{thm:dmlcconvergence}
    Consider the DMLC algorithm presented in Algorithm \ref{algo:dmlc}
    with the multi-fidelity GP model \eqref{eq:mfgp}.
    Let $\alpha \in (0,1)$ denote the factor by which
    maximum posterior variance $M^{(t)}$ is reduced in each epoch.
    Then $\hat{\phi}^{(t)}$ converges in probability to the true function.
\end{proposition}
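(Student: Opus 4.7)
The plan is to reduce this to the fact that DMLC is explicitly designed to drive the maximum posterior variance to zero, and then use the Gaussian structure of the MFGP posterior to pass from vanishing variance to convergence in probability. Unlike the SMLC case, here the key quantity $M^{(t)}$ is controlled deterministically rather than probabilistically, which should actually make the analysis cleaner than that of Proposition~\ref{thm:smlcconvergence}.

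First, I would argue that the virtual sampling step is well-defined: since the MFGP posterior variance at any point depends only on the sampling locations (and noise levels) and not on the observed values, the set $\b{X}_a^{(e)}$ can indeed be determined offline and has finite cardinality sufficient to guarantee $M^{(t)} \leq \alpha^e M^{(0)}$ at the end of epoch $e$'s learning phase. This is a standard property of GP regression and carries over to the MFGP formulation~\eqref{eq:mfgp} because the posterior covariance in~\eqref{eq:mfgpvar} is a deterministic function of $X_\ell, X_h, \b{\Theta}$. Combined with $\alpha \in (0,1)$, this yields
\begin{align}
    \lim_{e \to \infty} M^{(t_e)} \;\leq\; \lim_{e \to \infty} \alpha^e M^{(0)} \;=\; 0,
\end{align}
where $t_e$ denotes the iteration at the end of the learning phase of epoch $e$.

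Next, I would exploit the Gaussian structure: since $\phi$ is modeled as a draw from the prior MFGP and $\hat{\phi}^{(t)}(\b{x})$ is the posterior mean in~\eqref{eq:mfgpmean}, the error $\phi(\b{x}) - \hat{\phi}^{(t)}(\b{x})$ is a zero-mean Gaussian with variance $\var_{\phi}^{(t)}(\b{x}) \leq M^{(t)}$. Chebyshev's inequality then gives, for every $\b{x} \in \domain$ and every $\epsilon > 0$,
\begin{align}
    \prob\!\left( \abs{\phi(\b{x}) - \hat{\phi}^{(t_e)}(\b{x})} > \epsilon \right) \;\leq\; \frac{M^{(t_e)}}{\epsilon^2} \;\leq\; \frac{\alpha^e M^{(0)}}{\epsilon^2} \;\longrightarrow\; 0.
\end{align}
Monotonicity of the posterior variance in the sample set implies that $M^{(t)} \leq \alpha^e M^{(0)}$ continues to hold at all subsequent iterations within epoch $e$ (the coverage phase does not add samples but also does not remove them), so the bound above extends from the subsequence $\{t_e\}$ to all $t$, yielding convergence in probability as claimed.

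The main obstacle I expect is justifying the first step rigorously, namely that a \emph{finite} acquisition set $\b{X}_a^{(e)}$ achieving the $\alpha$-reduction always exists. This requires appealing to continuity of the posterior variance in the sampling locations together with compactness of $\domain$, so that greedy placement at points of maximum posterior variance can drive $M^{(t)}$ below any prescribed threshold in finitely many steps; this is essentially the argument behind \cite{Todescato2017} adapted to the MFGP kernel~\eqref{eq:mfgppri}. The remaining steps are then routine, and the argument parallels Proposition~\ref{thm:smlcconvergence} except that the deterministic epoch schedule removes the need to reason about Bernoulli sampling.
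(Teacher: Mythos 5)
Your overall strategy matches the paper's: show the learning phase of each epoch terminates after finitely many samples so that $M^{(t)} \leq \alpha^e M^{(0)} \to 0$, then convert vanishing posterior variance into convergence in probability via Chebyshev. The Chebyshev step, and your added remark that monotonicity of the posterior variance extends the bound from the subsequence $\{t_e\}$ to all $t$, are fine and consistent with the paper.

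However, the step you flag as "the main obstacle" is in fact the entire content of the paper's proof, and your proposal does not supply it. You assert that greedy placement at points of maximum posterior variance drives $M^{(t)}$ below any threshold in finitely many steps by "continuity of the posterior variance in the sampling locations together with compactness of $\domain$," deferring the details to the argument in \cite{Todescato2017}. That deferral does not work here: the convergence argument in \cite{Todescato2017} is tailored to \emph{stochastic} (Bernoulli-triggered) sampling and is what the paper invokes for SMLC; DMLC's deterministic greedy acquisition requires a different, self-contained argument, which is why the paper proves it from scratch. Concretely, the paper (i) fixes $\epsilon>0$ and chooses $\gamma,\nu,n$ with $\lambda - (\lambda-\gamma)^2/(\lambda+\nu+\sigma^2/n) \leq \epsilon$, (ii) uses uniform continuity of the kernel to build a finite $m$-cell partition on which the covariance varies by at most $\gamma$ and $\nu$, (iii) applies a pigeonhole argument showing that after $nm+1$ greedy samples some cell has been sampled $n+1$ times, (iv) bounds the spectral radius of $\bar{K}(X,X)+\sigma^2 I$ via the Gershgorin circle theorem to obtain the quantitative posterior-variance bound $\var_{\bar\phi}(\b{x}\,|\,X,\b{y}) \leq \epsilon$ throughout that cell, and (v) observes that when the greedy policy revisits that cell for the $(n+1)$-th time, the maximum variance over the whole domain must already be below $\epsilon$. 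Without some quantitative bound of this kind, "continuity plus compactness" alone does not establish that finitely many noisy samples suffice (noise variance $\sigma^2>0$ prevents any single sample from collapsing the local variance), so your proof as written has a genuine gap at its load-bearing step.
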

\begin{proof}
    Let $\Bar{\phi}(\b{x}) \sim GP(\Bar{\mu}(\b{x}), \Bar{k}(\b{x},\b{x}'))$ be a GP defined on $\Bar{\domain} \subseteq \domain$, where covariance function $\Bar{k}(\b{x},\b{x}')$ is continuous and $\Bar{k}(\b{x}, \b{x}) \leq \lambda, \forall \b{x} \in \Bar{\domain}$. Let $\sigma^2$ be the variance of additive Gaussian sampling noise. Now, we show the greedy sampling strategy reduces the maximum variance below threshold $\epsilon$ in finite iterations for any fixed $\epsilon$. Note that for every $\epsilon>0$, there exist $ \gamma$, $\nu$ and $n$ that satisfy
    % TODO: check eqs?
    $\lambda - (\lambda - \gamma)^2 / \big(\lambda + \nu + \frac{\sigma^2}{n}\big) \leq \epsilon$.
    % END TODO
    Since $\Bar{k}(\b{x},\b{x}')$ is continuous, there exists a
    finite partition $P=\set{p_1, \dots, p_{m}}$ of $\Bar{\domain}$, with $m$ large enough, such that for any $p_i \in P$ and for any pair of points $\b{x}, \b{x}' \in p_i$,
    % TODO: check eqs
    both $\abs{\Bar{k}(\b{x}, \b{x}) - \Bar{k}(\b{x}, \b{x}')} \leq \gamma$ and $\abs{\Bar{k}(\b{x}, \b{x}) - \Bar{k}(\b{x'}, \b{x}')} \leq \nu$.
    % END TODO
    After $nm+1$ sampling rounds, there exists at least one cell $p_i \in P$ that is selected $n_i \ge (n+1)$ times. Let $X \in \real^{2\times n}$ be the matrix of first $n$ sampling locations within $p_i$ and $\b{y} \in \real^{n}$ be the corresponding sampling results. Then, for any $\bs{x}\in p_i$, $\max_{\b{x}' \in p_i } \Bar{k}(\b{x}',\b{x}') \leq \Bar{k}(\b{x},\b{x}) + \nu$. Let $\Bar{K}(X,X) \in \real^{n\times n}$ be the covariance matrix with entries $\{\Bar{k}(\b{x}',\b{x}'')\}_{\b{x}', \b{x}'' \in X}$ and $\Bar{K}(X,\b{x})=\Bar{K}^{\T}(\b{x},X) \in \real^{n\times n}$ be the vector with entries $\{\Bar{k}(\b{x},\b{x}')\}_{\b{x}' \in X}$. Using the Gershgorin circle theorem~\cite{horn2012matrix}, the spectral radius of symmetric matrix $\Bar{K}(X,X) + \sigma^2 I$ is no greater than $n(\Bar{k} (\b{x}, \b{x})+\nu) + \sigma^2$. Thus, for every $\b{x} \in p_i$,
    \begin{align*}
        &\var_{\Bar{\phi}}(\b{x} | X,\b{y}) \\
        &= \Bar{k} (\b{x}, \b{x}) - \Bar{K}(\b{x},X)  \big(\Bar{K}(X,X) + \sigma^2 I \big)^{-1} \Bar{K}(X,\b{x}) \\
        &\leq \Bar{k} (\b{x}, \b{x}) - \frac{\| \Bar{K}(X,\b{x}) \|_2^2}{n (\Bar{k} (\b{x}, \b{x})+\nu) + \sigma^2}\\
        &\leq \Bar{k} (\b{x}, \b{x}) - \frac{(\Bar{k} (\b{x}, \b{x}) - \gamma)^2}{\Bar{k} (\b{x}, \b{x}) + \nu + \frac{\sigma^2}{n}} \leq \lambda - \frac{(\lambda - \gamma)^2}{\lambda + \nu + \frac{\sigma^2}{n}} \leq \epsilon,
    \end{align*}
    where we use the fact $x- {(x - \gamma)^2}/{(x + \nu + \frac{\sigma^2}{n})}$ increases with $x$ for $x>0$. Since the variance at every point in $p_i$ is smaller than $\epsilon$ after $n$ samples and $p_i$ is selected $n_i \ge (n+1)$ times, the variance at every point $\b{x}\in \mc D$ must be smaller than $\epsilon$ when $p_i$ is selected $(n+1)$-th time  by the greedy policy, i.e., after $nm+1$ overall samples have been collected. 
    % Since $p_i$ is is picked again by the greedy strategy after being selected $n$ times, the maximum variance over $\hat{\domain}$ after $nm+1$ samples is also bounded by $\epsilon$. 
    
    This shows each learning step in DMLC can be finished in finite iterations. Thus, for and $\epsilon$ and $\delta$, there is a finite $t'$ for which $M^{(t)} \leq \delta\epsilon^2$ holds for all $t > t'$. Now, by the Chebyshev inequality, we have
    $P ( | \hat{\phi}^{(t)}(\b{x}) - \Bar{\phi}(\b{x}) |  \geq \epsilon ) \leq \var_{\phi}^{(t)}(\b{x}) / \epsilon^2 \leq \delta$
    at any point $\b{x} \in \domain$ for all $t > t'$, which concludes the proof.
\end{proof}

\section{Simulations}\label{sec:simulation}

Numerical simulations showed that multi-fidelity implementations of SMLC and DMLC outperformed their single-fidelity counterparts.
%
% \footnote{It is worth clarifying what we mean by \emph{multi-fidelity (single-fidelity) implementation}. The SMLC and DMLC algorithms are assumed to leverage a multi-fidelity GP learning model by default. However, one may replace this multi-fidelity GP with a traditional single-fidelity GP, giving rise to a \emph{single-fidelity implementation} with different learning dynamics from the default \emph{multi-fidelity implementation}.}
%
In particular, multi-fidelity implementations of SMLC and DMLC incurred lower \emph{cumulative coverage regret} $R_{\phi}$, defined as the sum:
\begin{gather}\label{eq:regret}
    R_\phi = \sum_t r_{\phi}^{(t)}, \quad \text{where}\\
    r_{\phi}^{(t)} = \closs \left(\b{\eta}^{(t)}, \vor_{\domain}(\b{\eta}^{(t)})\right) - \closs\left(\b{c} \big(\vor_{\domain}(\b{\eta}^{(t)})\big), \vor_{\domain}(\b{\eta}^{(t)})\right), \nonumber
\end{gather}
such that $r_{\phi}^{(t)}$ measures the instantaneous deviation of $\b{\eta}^{(t)}$ from the set of centroidal Voronoi partitions generated by $\b{\eta}^{(t)}$.
This metric is nonnegative and achieves 0 at a centroidal Voronoi partition. Unlike the loss $\closs$ defined in \eqref{eq:coverageloss}, it does not depend on the initial configuration $\b{\eta}^{(0)}$, allowing a fair comparison between runs. In addition, it rewards a balance between learning and coverage---agents must accurately estimate $\phi$ in order to accurately estimate $c^{(t)}$.

% To compare the multi-fidelity implementations of SMLC and DMLC with single-fidelity implementations, 
We simulated each algorithm over a 21 $\times$ 21 point discretization of the unit square $\domain = [0, 1]^2$ with four agents. 
%For simplicity, we limit consideration to the existence of two fidelity levels $f \in \set{h, \ell}$ such that the multi-fidelity GP model proposed in Section \ref{sec:problem} reduces to ${\phi_h = \rho \phi_\ell + \delta_h}$ and $\phi_\ell = \delta_\ell$.
We defined $\phi_h$ and $\phi_\ell$ as the Gaussian functions
\begin{align}
    \phi_f &= v_f^2 \exp \left( 
    - \norm{ \b{x} - \b{c}_f }^2 / 2l_f^2
    \right), \quad f \in \set{h, \ell},
    \label{eq:simphi}
\end{align}
with $v_\ell = \sqrt{5}, \; l_\ell = \sqrt{0.2}, \; \b{c}_\ell = (0.5, 0.5)$ and $v_h = \sqrt{10}, \; l_h = \sqrt{0.05}, \; \b{c}_h = (0.75, 0.75)$ as shown in Fig.~\ref{fig:function}.
%
% Intuitively, one may imagine the low-fidelity sensory function $\phi_\ell$
% is manually constructed by a human operator or
% learned from sparse, statically-placed sensors throughout the operating environment,
% while the high-fidelity sensory function $\phi_h$ is taken to be the ground truth.
% For example, as discussed in Section \ref{subsec:mfac},
% $\phi_\ell$ could represent an estimate of pollution
% levels over $\domain$ constructed from satellite imagery, while
% $\phi_h$ could represent true pollution levels measured \emph{in situ}.
%
Importantly, $\phi_\ell$ and $\phi_h$ are correlated such that access to low-fidelity samples 
%$(\b{X}_\ell, \b{y}_\ell)$ with $\b{y}_\ell=\phi_\ell(\b{X}_\ell)+\epsilon_\ell,\; \epsilon_\ell \sim \normal(0, \sigma_\ell^2)$
provide insight into the behavior of $\phi_h$, despite the fact that $\phi_\ell$ is biased from the ground truth $\phi_h$. 
% Such sensory functions may occur when low-fidelity data identifies a broad region of interest and the event is approximated by a Gaussian centered at the center of this region, while high-fidelity input is able to better localize the region of interest.

Twenty-five low-fidelity samples at the points
$\b{X}_\ell = \{ (0.25i, 0.25j) \in \domain \st 0 \leq i, j \leq 4 \}$
drawn from $\phi_\ell$ with Gaussian noise $\sigma_\ell^2 = 1$
were provided to initialize the learning model of each algorithm.
Thereafter, all samples $(\b{X}_h, \b{y}_h)$ collected by agents
were drawn from $\phi_h$ with Gaussian noise $\sigma_h^2 = 1$.
Because the single-fidelity implementations of SMLC and DMLC
cannot distinguish fidelity levels, they combine all observations
$(\b{X}_\ell, \b{y}_\ell)$ and $(\b{X}_h, \b{y}_h)$
into a single dataset $(\b{X}, \b{y})$ throughout the learning process.
In contrast, the multi-fidelity implementations of SMLC and
DMLC maintain separate datasets.

% We simulated a single-fidelity implementation of 
% each algorithm given \emph{no}
% low-fidelity data which forces agents to learn $\phi_h$ from scratch
% as a baseline,
We simulated the classic coverage control algorithm presented in \cite{Cortes2004} (which assumes perfect knowledge of $\phi_h$) as a baseline to compare against. Single and multi-fidelity GP models were implemented in Python by adapting existing open-source code \cite{PerdikarisCode}, and GP hyperparameters were fit to maximum likelihood estimates given 100 training samples from \eqref{eq:simphi} using standard optimization techniques. 
% Other simulation parameters used throughout are summarized in Table \ref{tab:simparams}. 
Each algorithm was simulated with 4 agents for 60 iterations, corresponding to epoch lengths of 4, 8, 16 and 32 in DMLC chosen by setting uncertainty reduction parameter $\alpha=1/\sqrt{2}$ and epoch length multiplier $\beta=2$. Simulations of each algorithm were repeated 50 times with randomly generated initial configurations $\b{\eta}^{(0)}$, and results were averaged. Code and videos are available via GitHub \cite{McDonaldCode}.

\begin{figure}[t!]
    \centering
    \includegraphics[width=0.4\textwidth]{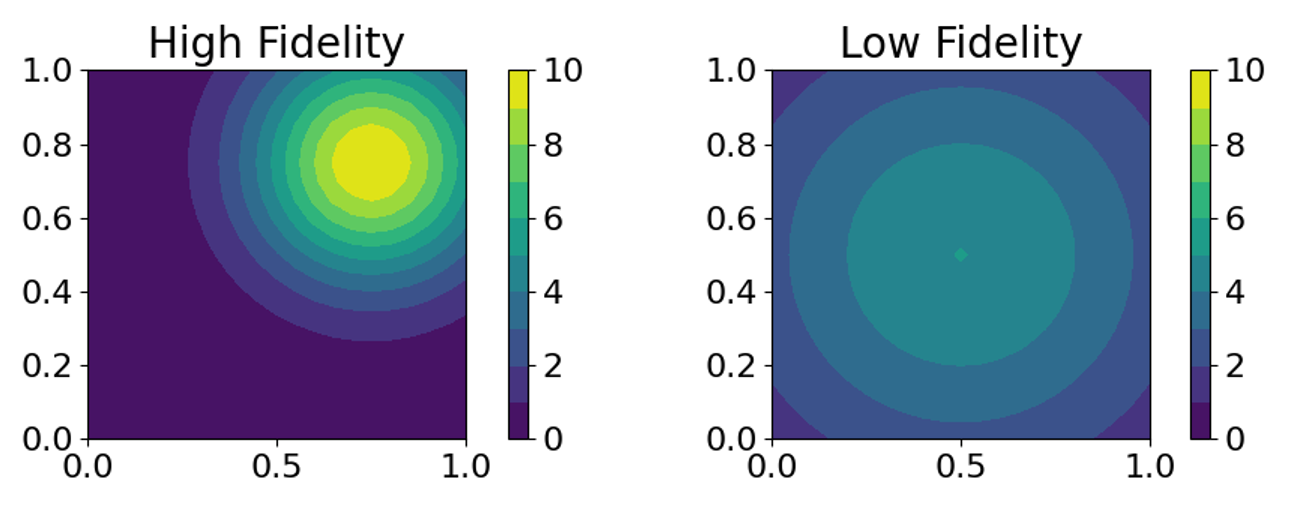}
    \caption{$\phi_h$ (left) and $\phi_\ell$ (right)
    used in simulations \eqref{eq:simphi}.
    }
    \label{fig:function}
    \vspace{-2em}
\end{figure}

% \begin{table}[t!]
%     \renewcommand{\arraystretch}{1.1}
%     \centering
%     \caption{Simulation parameters. Code and videos available online \cite{McDonaldCode}. }
%     \label{tab:simparams}
%     \begin{tabular}{l|l}
%         \hline
%         \bfseries Parameter & \bfseries Value\\
%         \hline
%         Number of agents & 4\\
%         Number of iterations per simulation & 60\\
%         Number of simulations per algorithm & 50\\
%         SMLC decision function $F$ & $F(x) = x$\\
%         DMLC reduction parameter $\alpha$ & $\sqrt{2}$\\
%         DMLC epoch length multiplier $\beta$ & 2\\
%         DMLC epoch lengths & $[4, 8, 16, 32]$\\
%         Algorithms simulated & Cortes \cite{Cortes2004}, SMLC, DMLC\\
%         \hline
%         % Fidelities (with/out samples from $\phi_\ell$) & Single (with),\\
%         % & Single (without),\\
%         % & Multi (with)
% \end{tabular}
% \vspace{-1em}
% \end{table}

% \footnotetext[2]{\url{https://github.com/paraklas/GPTutorial}}
% \footnotetext[3]{\url{https://github.com/andrewmcdonald27/multifidelity-learning-coverage}}

\begin{figure*}[hbt!]
    \centering
    \includegraphics[width=0.95\textwidth]{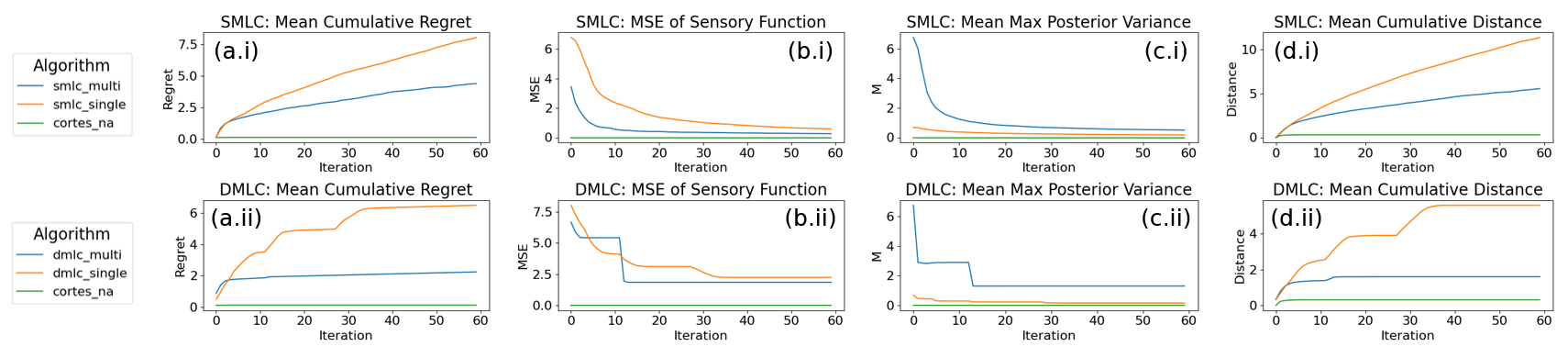}
    \caption{Simulation results averaged over 50 runs per algorithm-fidelity
    pair. Blue lines = multi-fidelity implementations,
    orange lines = single-fidelity implementations, and
    green lines = baseline from \cite{Cortes2004} which assumes perfect knowledge of $\phi_h$. See text for further details.
    % Panel (a) shows instantaneous coverage regret \eqref{eq:regret};
    % Panel (b) shows mean squared error of the estimate
    % $\hat{\phi}^{(t)}$ with respect to $\phi$;
    % Panel (c) shows maximum posterior variance $M^{(t)}$;
    % Panel (d) shows distance travelled.
    % Simulation parameters are presented in Table \ref{tab:simparams} Sensory function $\phi$ is defined in \eqref{eq:simphi} and visualized in Figure \ref{fig:function}.
    % Code and videos are available via GitHub \cite{McDonaldCode}.
    }
    \label{fig:performance}
    \vspace{-1em}
\end{figure*}

Key results of our simulations are shown in Fig. \ref{fig:performance}. Panel (a) compares the cumulative coverage regret \eqref{eq:regret} incurred by SMLC and DMLC using single- and multi-fidelity learning models, and shows that the multi-fidelity implementation of both algorithms outperforms the single-fidelity implementation when provided low-fidelity data $(\b{X}_\ell, \b{y}_\ell)$ prior to execution. This is due to the fact the multi-fidelity model is able to structurally distinguish between low-fidelity and high-fidelity data, and estimation of model hyperparameters ensures that (in)consistency between these data are appropriately captured and incorporated. The single-fidelity model does not provide this flexibility. 
%
% Interestingly, the multi-fidelity implementation of each algorithm
% incurs virtually the same mean regret as the single-fidelity implementation
% which considers no low-fidelity data and learns $\phi_h$ from scratch.
% We conjecture this is due to the fact that the $\phi_h$
% chosen for our simulations is a relatively ``easy''
% function to learn, and that consideration of the low-fidelity information
% would be of greater value for functions which are ``hard'' to learn.
%
Panel (b) compares the mean squared error $|| \hat{\phi}(\b{X}^*) - \phi(\b{X}^*) ||^2 \; / \; | \b{X}^* |$ of the estimate $\hat{\phi}^{(t)}$, affirming that multi-fidelity methods learn a more accurate representation of $\phi$ when provided low-fidelity data. Panel (c) compares the evolution of maximum posterior variance $M^{(t)}=\max_{\b{x} \in \domain} \var_{{\phi}}^{(t)}(\b{x})$, and highlights another key advantage offered by the multi-fidelity approach: more nuanced quantification of uncertainty. Because the single-fidelity implementations cannot distinguish between observations $(\b{X}_\ell, \b{y}_\ell)$ and $(\b{X}_h, \b{y}_h)$, the mean estimate $\hat{\phi}^{(t)}$ becomes overconfident and does not account for potential bias introduced by low-fidelity observations.
Finally, panel (d) compares the mean distance travelled by each agent over the course of execution. 
% Evidently, single-fidelity implementations involve more travel due to initial bias in the mean estimate $\hat{\phi}^{(t)}$: more samples are required to correct this bias.
Evidently, single-fidelity implementations involve more travel as a result of overconfidence in the mean estimate $\hat{\phi}^{(t)}$: more samples are required to reduce variance by a further multiplicative factor.
% \vs{involve more travel since the low-fidelity observations induce low posterior variance, thereby requiring more samples to reduce variance by  a certain factor.}
% do not converge as smoothly as a result of the initial bias introduced by low-fidelity observations, causing excessive motion.
Panel (d) also confirms that agents executing DMLC travel less distance on average than agents executing SMLC, as discussed in Section \ref{sec:analysis}.

%%%%%%%%%%%%%%%%%%%%%%%%%%%%%%%%%%%%%%%%%%%%%%%%%%%%%%%%%%%%%%%%%%%%%%%%%%%%%%%%%%%%%%%%%%%%%%%%%%%%%%%%%%%%%%%%%%%%%%%%

\section{Conclusion}\label{sec:conclusion}

Heterogeneous multi-robot sensing systems improve upon homogeneous systems by fusing information across sensory modalities and learning more accurate representations of physical processes. However, the method used to combine heterogeneous data plays a crucial role in system performance and merits principled consideration. In this work, we used multi-fidelity Gaussian Processes (GPs) to combine sensory data of different fidelities collected by a heterogeneous multi-robot sensing system. We applied this approach to the task of online estimation and coverage in which an unknown sensory function $\phi$ must be learned and covered by agents, and illustrated scenarios in which single-fidelity approaches fall short. We proposed SMLC and DMLC, two novel adaptive coverage algorithms which leverage the multi-fidelity GP framework,
%---namely Stochastic Sequencing of Multi-fidelity Learning and Coverage (SMLC) and Deterministic Sequencing of Multi-fidelity Learning and Coverage (DMLC)---
and proved the asymptotic convergence of each. Finally, we demonstrated the empirical efficacy of SMLC and DMLC through numerical simulations.
%, and provide the code used to conduct them via GitHub \cite{McDonaldCode}.

%%%%%%%%%%%%%%%%%%%%%%%%%%%%%%%%%%%%%%%%%%%%%%%%%%%%%%%%%%%%%%%%%%%%%%%%%%%%%%%%%%%%%%%%%%%%%%%%%%%%%%%%%%%%%%%%%%%%%%%%

\bibliographystyle{IEEEtran}
%\balance
\bibliography{IEEEabrv, main_arxiv}

\end{document}